\newtheorem{theorem}{Theorem}[section]
\newtheorem{lemma}[theorem]{Lemma}
\renewcommand{\eqref}[1]{Eq.~(\ref{eq:#1})}
\newcommand{\secref}[1]{Section \ref{sec:#1}}
\newcommand{\thmref}[1]{Theorem \ref{thm:#1}}
\newcommand{\lemref}[1]{Lemma \ref{lem:#1}}
\newcommand{\appref}[1]{Appendix \ref{ap:#1}}
\newcommand{\algref}[1]{Alg.~\ref{alg:#1}}
\renewcommand{\P}{\mathbb{P}}
\newcommand{\E}{\mathbb{E}}
\newcommand{\reals}{\mathbb{R}}
\newcommand{\nats}{\mathbb{N}}
\newcommand{\one}{\mathbb{I}}
\newcommand{\norm}[1]{\|#1\|}
\newcommand{\cA}{\mathcal{A}}
\newcommand{\cE}{\mathcal{E}}
\renewcommand{\vec}[1]{\mathbf{#1}}
\newcommand{\vw}{\vec{w}}
\newcommand{\vx}{\vec{x}}
\newcommand{\vX}{\vec{X}}
\newcommand{\va}{\vec{a}}
\newcommand{\vv}{\vec{v}}
\newcommand{\vz}{\vec{z}}
\renewcommand\t{\ensuremath{{\scriptscriptstyle{\top}}}}
\newcommand\opt{{\ensuremath{\operatorname{\star}}}}
\newcommand{\xnorm}{\norm{\vX}_{\Sigma_D^{-1}}}
\newcommand{\snorm}{\norm{\vX}_*}
\newcommand{\snorma}[1]{\norm{#1}_*}
\newcommand{\xnorma}[1]{\norm{#1}_{\Sigma_D^{-1}}}
\newcommand{\alg}{\textsc{reg}}
\newcommand{\sample}{\textsc{sample}}
\newcommand{\pos}{\reals^*_+}
\newcommand{\supx}{\mathrm{supp}_X}
\title{Active Regression by Stratification}
\author{Sivan Sabato\\
 Department of Computer Science\\
Ben Gurion University, Beer Sheva, Israel\\
\texttt{sabatos@cs.bgu.ac.il}\\
\And Remi Munos\\
INRIA\\
Lille, France\\
\texttt{remi.munos@inria.fr}
}
\begin{document}
\maketitle

\begin{abstract}
We propose a new active learning algorithm for parametric linear regression with random design. We provide finite sample convergence guarantees for general distributions in the misspecified model. This is the first active learner for this setting that provably can improve over passive learning. Unlike other learning settings (such as classification), in regression the passive learning rate of $O(1/\epsilon)$ cannot in general be improved upon. Nonetheless, the so-called `constant' in the rate of convergence, which is characterized by a distribution-dependent \emph{risk}, can be improved in many cases. For a given distribution, achieving the optimal risk requires prior knowledge of the distribution. Following the stratification technique advocated in Monte-Carlo function integration, our active learner approaches the optimal risk using piecewise constant approximations.
\end{abstract}

\section{Introduction}

In linear regression, the goal is to predict the real-valued labels of data points in Euclidean space using a linear function. The quality of the predictor is measured by the expected squared error of its predictions. In the standard regression setting with random design, the input is a labeled sample drawn i.i.d.~from the joint distribution of data points and labels, and the cost of data is measured by the size of the sample. 
This model, which we refer to here as \emph{passive learning}, is useful when both data and labels are costly to obtain. However, in domains where raw data is very cheap to obtain, a more suitable model is that of \emph{active learning} \cite[see, e.g.,][]{CohnAtLa94}. In this model we assume that random data points are essentially free to obtain, and the learner can choose, for any observed data point, whether to ask also for its label. The cost of data here is the total number of requested labels.

In this work we propose a new active learning algorithm for linear regression. We provide finite sample convergence guarantees for general distributions, under a possibly misspecified model. For parametric linear regression, the sample complexity of passive learning as a function of the excess error $\epsilon$ is of the order $O(1/\epsilon)$.
This rate cannot in general be improved by active learning, unlike in the case of classification \citep{BalcanBeLa09}.
Nonetheless, the so-called `constant' in this rate of convergence depends on the distribution, and this is where the potential improvement by active learning lies. 

Finite sample convergence of parametric linear regression in the passive setting has been studied by several \cite[see, e.g.,][]{Gyorfi02, HsuKaZh12}. The standard approach is Ordinary Least Squares (OLS), where the output predictor is simply the minimizer of the mean squared error on the sample. Recently, a new algorithm for linear regression has been proposed \citep{HsuSabato14}. This algorithm obtains an improved convergence guarantee under less restrictive assumptions. An appealing property of this guarantee is that it provides a direct and tight relationship between the point-wise error of the optimal predictor and the convergence rate of the predictor. We exploit this to allow our active learner to adapt to the underlying distribution. Our approach employs a stratification technique, common in Monte-Carlo function integration \citep[see, e.g.,][]{Glasserman04}. For any finite partition of the data domain, an optimal oracle risk can be defined, and the convergence rate of our active learner approaches the rate defined by this risk. By constructing an infinite sequence of partitions that become increasingly refined, one can approach the globally optimal oracle risk.

Active learning for parametric regression has been investigated in several works, some of them in the context of statistical experimental design. One of the earliest works is \cite{CohnGhJo96}, which proposes an active learning algorithm for locally weighted regression, assuming a well-specified model and an unbiased learning function. \cite{Wiens98,Wiens00} calculates a minimax optimal design for regression given the marginal data distribution, assuming that the model is approximately well-specified. \cite{Kanamori02} and \cite{KanamoriSh03} propose an active learning algorithm that first calculates a maximum likelihood estimator and then uses this estimator to come up with an optimal design. Asymptotic convergence rates are provided under asymptotic normality assumptions. \cite{Sugiyama06} assumes an approximately well-specified model and i.i.d.~label noise, and selects a design from a finite set of possibilities. The approach is adapted to pool-based active learning by \cite{Sugiyama09}. \cite{
BurbidgeRoKi07} propose an adaptation of Query By Committee. \cite{CaiZhZh13} propose guessing the potential of an example to change the current model. \cite{Ganti12} propose a consistent pool-based active learner for the squared loss.
A different line of research, which we do not discuss here, focuses on active learning for non-parameteric regression, e.g. \cite{Efromovich07}.

\textbf{Outline} In \secref{setting} the formal setting and preliminaries are
introduced.  In \secref{oracle} the notion of an \emph{oracle risk} for a
given distribution is presented. The stratification technique is
detailed in \secref{strata}. The new active learner algorithm and its
analysis are provided in \secref{active.learning}, with the main
result stated in \thmref{main}.  In \secref{lowerbound} we show via a simple
example that in some cases the active learner approaches the maximal possible improvement over passive learning.

\section{Setting and Preliminaries}\label{sec:setting}
We assume a data space in $\reals^d$ and labels in $\reals$. For a distribution $P$ over $\reals^d \times \reals$,
denote by $\supx(P)$ the support of the marginal of $P$ over $\reals^d$.
 Denote the strictly positive reals by $\pos$. We assume that labeled
 examples are distributed according to a distribution $D$. A random
 labeled example is $(\vX,Y) \sim D$, where $\vX \in \reals^d$ is
 the example and $Y \in \reals$ is the label. Throughout this work, whenever
 $\P[\cdot]$ or $\E[\cdot]$ appear without a subscript, they are taken with respect to $D$. $D_X$ is
 the marginal distribution of $\vX$ in pairs draws from $D$. The
 conditional distribution of $Y$ when the example is $\vX = \vx$ is
 denoted $D_{Y \mid \vx}$. The function $\vx \mapsto D_{Y \mid \vx}$
 is denoted $D_{Y \mid X}$. 

A predictor is a function from $\reals^d$ to $\reals$ that predicts a
label for every possible example.  Linear predictors are functions of the form $\vx \mapsto \vx^\t \vw$ for some $\vw \in
\reals^d$.  The squared loss of $\vw \in \reals^d$ for an example $\vx
\in \reals^d$ with a true label $y \in \reals$ is $\ell((\vx,y),\vw) =
(\vx^\t \vw - y)^2$.  The expected squared loss of $\vw$ with respect
to $D$ is $L(\vw,D) = \E_{(\vX,Y)\sim D}[(\vX^\t \vw - Y)^2]$.  The
goal of the learner is to find a $\vw$ such that
$L(\vw)$ is small.  The optimal loss achievable by a linear predictor is
$L_\opt(D) = \min_{\vw \in \reals^d}L(\vw,D)$. We denote by
$\vw_\opt(D)$ a minimizer of $L(\vw,D)$ such that $L_\opt(D) =
L(\vw_\opt(D),D)$. In all these notations the parameter $D$ is
dropped when clear from context.

In the passive learning setting, the learner draws random i.i.d.~pairs $(\vX,Y) \sim D$. The sample complexity of the learner is the number of drawn pairs. 
In the active learning setting, the learner draws i.i.d.~examples $\vX \sim D_X$. 
For any drawn example, the learner may draw a label according to the distribution $D_{Y\mid \vX}$. The label complexity of the learner is the number of drawn labels. In this setting it is easy to approximate various properties of $D_X$ to any accuracy, with zero label cost. Thus we assume for simplicity direct access to some properties of $D_X$, such as the covariance matrix of $D_X$, denoted $\Sigma_D = \E_{\vX \sim D_X}[\vX \vX^\t]$, and expectations of some other functions of $\vX$. We assume w.l.o.g.\ that $\Sigma_D$ is not singular. For a matrix $A \in \reals^{d \times d}$, and $\vx \in \reals^d$, denote $\norm{\vx}_A = \sqrt{\vx^\t A \vx}$. Let $R^2_D = \max_{\vx \in \supx(D)} \xnorma{\vx}^2$. This is the \emph{condition number} of the marginal distribution $D_X$.
We have 
\begin{equation}\label{eq:expnorm}
\E[\norm{\vX}_{\Sigma_D^{-1}}^2] = \E [ \mbox{tr} (\vX^\t  \Sigma_D^{-1} \vX)] = \mbox{tr} ( \Sigma_D^{-1} \E[\vX \vX^\t] ) = d.
\end{equation}

 \citet{HsuSabato14} provide a passive learning algorithm for least squares linear regression with a minimax optimal sample complexity (up to logarithmic factors). The algorithm is based on splitting the labeled sample into several subsamples, performing OLS on each of the subsamples, and then choosing one of the resulting predictors via a generalized median procedure. We give here a useful version of the result.\footnote{This is a slight variation of the original result of \cite{HsuSabato14}, see \appref{derivation}.}

\begin{theorem}[\citealp{HsuSabato14}]\label{thm:alg}
There are universal constants $C,c,c',c'' > 0$ such that the following holds.
Let $D$ be a distribution over $\reals^d \times \reals$.
There exists an efficient algorithm that accepts as input a confidence $\delta \in (0,1)$ and a labeled sample of size $n$ drawn i.i.d.~from $D$, and returns $\hat{\vw} \in \reals^d$, such that if $n \geq c R_D^2\log(c'n)\log(c''/\delta)$,
with probability $1-\delta$,
\begin{equation}\label{eq:medbound}
L(\hat{\vw},D) - L_\opt(D) = \norm{\vw_\opt(D) - \hat{\vw}}^2_{\Sigma_D} \leq \frac{C\log(1/\delta)}{n} \cdot \E_{D}[\xnorm^2 (Y -\vX^\t \vw_\opt(D))^2].
\end{equation}
\end{theorem}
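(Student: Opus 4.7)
The plan is to follow the algorithm's own structure: split the labeled sample into $N \asymp \log(1/\delta)$ disjoint subsamples of size $m = n/N$, run ordinary least squares on each subsample to obtain predictors $\hat{\vw}_1,\dots,\hat{\vw}_N$, and then output a ``generalized median'' $\hat{\vw}$ --- any $\hat{\vw}_i$ such that a majority of the $\hat{\vw}_j$ lie within some radius (measured in the $\Sigma_D$-norm) of $\hat{\vw}_i$. This is the standard recipe for turning a constant-probability OLS guarantee into a high-probability one while keeping the $\delta$ dependence logarithmic.

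First I would analyze a single subsample. On the event that the empirical covariance $\hat\Sigma$ is invertible, the OLS residual satisfies $\hat{\vw}_i - \vw_\opt = \hat\Sigma^{-1} \cdot \frac{1}{m}\sum_{j=1}^m \vX_j (Y_j - \vX_j^\t \vw_\opt)$. I would show that with probability at least $3/4$,
\begin{equation*}
\norm{\hat{\vw}_i - \vw_\opt}_{\Sigma_D}^2 \leq \frac{C'}{m}\, \E\bigl[\xnorm^2 (Y - \vX^\t \vw_\opt)^2\bigr].
\end{equation*}
The condition $m \gtrsim R_D^2 \log m$ is used via a matrix Chernoff bound on the i.i.d.\ rank-one terms $\Sigma_D^{-1/2}\vX_j\vX_j^\t \Sigma_D^{-1/2}$ (whose operator norm is bounded by $R_D^2$) to conclude $\hat\Sigma \succeq \tfrac12 \Sigma_D$, so that $\hat\Sigma^{-1}$ behaves like $\Sigma_D^{-1}$ up to constants. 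The normal-equation average $\frac{1}{m}\sum_j \Sigma_D^{-1/2} \vX_j (Y_j - \vX_j^\t \vw_\opt)$ has mean zero by the first-order optimality of $\vw_\opt$, and its second moment equals $\frac{1}{m}\E[\xnorm^2 (Y - \vX^\t \vw_\opt)^2]$, so Chebyshev's inequality delivers the constant-probability bound.

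Next, the confidence-boosting step. Let $\epsilon^2$ denote the right-hand side above. By a Hoeffding/Chernoff argument over the $N$ independent subsamples, with probability at least $1-\delta$ at least $\tfrac{2}{3}N$ of the $\hat{\vw}_i$ satisfy $\norm{\hat{\vw}_i - \vw_\opt}_{\Sigma_D} \leq \epsilon$. Any majority cluster therefore lies inside a $\Sigma_D$-ball of radius $2\epsilon$ around $\vw_\opt$, and a standard triangle-inequality argument shows that the generalized median --- the $\hat{\vw}_i$ around which the smallest majority ball can be drawn --- is within $O(\epsilon)$ of $\vw_\opt$ in $\Sigma_D$-norm. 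Since for the squared loss $L(\vw,D) - L_\opt(D) = \norm{\vw - \vw_\opt}_{\Sigma_D}^2$, collecting factors yields the stated bound, with the $\log(1/\delta)$ factor coming from $N$.

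The main obstacle is the single-subsample analysis: obtaining a Chebyshev-type bound on the cross term $\frac{1}{m}\sum_j \vX_j (Y_j - \vX_j^\t \vw_\opt)$ whose variance is controlled by the $\xnorm$-weighted prediction error rather than by a worst-case noise variance, and doing so without sub-Gaussian or boundedness assumptions on $Y$. The ``slight variation'' flagged in the footnote presumably concerns the precise tracking of the $R_D^2 \log(c'n)$ dependence in the sample-size condition and of the constants inside the matrix-concentration step; I would be careful to keep the final bound sensitive only to the distribution-dependent quantity $\E[\xnorm^2 (Y - \vX^\t \vw_\opt)^2]$, since this is exactly the object the active learning analysis in later sections needs to approximate and minimize.
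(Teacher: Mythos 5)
This theorem is not actually proved in the paper---it is imported from Hsu and Sabato (2014), with Appendix~\ref{ap:derivation} only noting that their ``Condition 1'' is replaced by the condition-number requirement $n \geq c R_D^2\log(c'n)\log(c''/\delta)$---and your sketch is a faithful reconstruction of precisely the argument used in that cited work: split the sample into $\Theta(\log(1/\delta))$ disjoint subsamples, analyze OLS on each via a spectral lower bound $\hat\Sigma \succeq \tfrac12 \Sigma_D$ (matrix Chernoff, using $R_D^2$) together with a Chebyshev/Markov bound on the mean-zero cross term whose second moment is exactly $\frac{1}{m}\E[\norm{\vX}_{\Sigma_D^{-1}}^2(Y-\vX^\t\vw_\opt)^2]$, and then boost to confidence $1-\delta$ with a generalized median. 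The one detail to flag is that a purely passive algorithm cannot measure the median-selection distances in the $\Sigma_D$-norm, since $\Sigma_D$ is unknown to it; the cited proof carries out the median step in a data-dependent norm (empirical covariances, spectrally equivalent to $\Sigma_D$ under the same condition-number assumption), which is a minor repair and is moot in this paper's active setting where $\Sigma_D$ is assumed known.
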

This result is particularly useful in the context of active learning, since it provides an explicit dependence on the point-wise errors of the labels, including in heteroscedastic settings, where this error is not uniform. As we see below, in such cases active learning can potentially gain over passive learning. 
We denote an execution of the algorithm on a labeled sample $S$ by $\hat{\vw} \leftarrow \alg(S,\delta)$. The algorithm is used a black box, thus any other algorithm with similar guarantees could be used instead. For instance, similar guarantees might hold for OLS for a more restricted class of distributions. 

Throughout the analysis we omit for readability details of integer rounding, whenever the effects are negligible. We use the notation $O(\mathrm{exp})$, where $\mathrm{exp}$ is a mathematical expression, as a short hand for $\bar{c}\cdot \mathrm{exp} + \bar{C}$ for some universal constants $\bar{c},\bar{C} \geq 0$, whose values can vary between statements.

\section{An Oracle Bound for Active Regression}\label{sec:oracle}

The bound in \thmref{alg} crucially depends on the input distribution $D$. In an active learning framework, \emph{rejection sampling} \citep{vonNeumann51} can be used to simulate random draws of labeled examples according to a different distribution, without additional label costs. By selecting a suitable distribution, it might be possible to improve over \eqref{medbound}. Rejection sampling for regression has been explored in \citet{Kanamori02,KanamoriSh03,Sugiyama06} and others, mostly in an asymptotic regime. Here we use the explicit bound in \eqref{medbound} to obtain new finite sample guarantees that hold for general distributions.

Let $\phi:\reals^d \rightarrow \pos$ be a strictly positive weight function such that $\E[\phi(\vX)] = 1$. We define the distribution $P_\phi$ over $\reals^d \times \reals$ as follows: 
For $\vx \in \reals^d,y\in \reals$, let $\Gamma_\phi(\vx,y) = \{ (\tilde{\vx},\tilde{y}) \in \reals^d \times \reals \mid \vx = \frac{\tilde{\vx}}{\sqrt{\phi(\tilde{\vx})}}, y = \frac{\tilde{y}}{\sqrt{\phi(\tilde{\vx})}}\},$
and define $P_\phi$ by
\[
\forall (\vX,Y) \in \reals^d \times \reals,\qquad P_\phi(\vX,Y) = \int_{(\tilde{\vX},\tilde{Y}) \in \Gamma_\phi(\vX,Y)} \phi(\tilde{\vX}) dD(\tilde{\vX},\tilde{Y}).
\]
A labeled i.i.d.~sample drawn according to $P_\phi$ can be simulated using rejection sampling without additional label costs (see \algref{sample} in \appref{sampling}). We denote drawing  $m$ random labeled examples according to $P$ by $S \leftarrow \sample(P,m)$. 
For the squared loss on $P_\phi$ we have
\begin{align*}
L(\vw,P_\phi) &= \int_{(\vX,Y) \in \reals^d} \ell((\vX,Y),\vw) \, dP_\phi(\vX,Y) \\
&  \stackrel{(*)}{=} \int_{(\vX,Y) \in \reals^d}\ell((\vX,Y),\vw)\,\int_{(\tilde{\vX},\tilde{Y})\in \Gamma_\phi(\vX,Y)} \phi(\tilde{\vX}) \, dD(\tilde{\vX},\tilde{Y}) \\
&= \int_{(\tilde{\vX},\tilde{Y}) \in \reals^d}\ell((\frac{\tilde{\vX}}{\sqrt{\phi(\tilde{\vX})}}, \frac{\tilde{Y}}{\sqrt{\phi(\tilde{\vX})}}),\vw) \, \phi(\tilde{\vX}) \, dD(\tilde{\vX},\tilde{Y}) \\
&= \int_{(\vX,Y) \in \reals^d} \ell((\vX,Y),\vw) \, dD(\vX,Y) = L(\vw,D).
\end{align*}
The equality $(*)$ can be rigorously derived from the definition of Lebesgue integration. 
It follows that also $L_\opt(D) = L_\opt(P_\phi)$ and that $\vw_\opt(D) = \vw_\opt(P_\phi)$. We thus denote these by $L_\opt$ and $\vw_\opt$. 
In a similar manner, we have $\Sigma_{P_\phi} = \int \vX \vX^\t \, dP_{\phi}(\vX,Y) = \int \vX \vX^\t\,dD(\vX,Y) = \Sigma_D.$
From now on we denote this matrix simply $\Sigma$. We denote $\norm{\cdot}_{\Sigma}$ by $\norm{\cdot}$, and $\norm{\cdot}_{\Sigma^{-1}}$ by $\snorma{\cdot}$.
The condition number of $P_\phi$ is $R^2_{P_\phi} = \max_{\vx \in \supx(D)}\frac{\norm{\vx}^2_*}{\phi(\vx)}$.

If the regression algorithm is applied to $n$ labeled examples drawn from the simulated $P_\phi$, 
then by \eqref{medbound} and the equalities above, with probability $1-\delta$, if $n \geq c R_{P_\phi}^2\log(c'n)\log(c''/\delta))$,
\begin{align*}
L(\hat{\vw}) - L_\opt &\leq \frac{C\cdot \log(1/\delta)}{n}\cdot\E_{P_\phi}[\snorm^2 (\vX^\t \vw_\opt - Y)^2]\\
&= \frac{C\cdot \log(1/\delta)}{n}\cdot\E_{D}[\snorm^2 (\vX^\t \vw_\opt - Y)^2/\phi(\vX)].\notag
\end{align*}
Denote $\psi^2(\vx) := \norm{\vx}^2_* \cdot \E_D[(\vX^\t \vw_\opt - Y)^2 \mid \vX = \vx].$
Further denote $\rho(\phi) := \E_{D}[\psi^2(\vX)/\phi(\vX)]$, which we term the \emph{risk} of $\phi$. Then, if $n \geq c R_{P_\phi}^2\log(c'n)\log(c''/\delta)$, with probability $1-\delta$,
\begin{equation}\label{eq:phibound}
L(\hat{\vw}) - L_\opt \leq \frac{C\cdot \rho(\phi)\log(1/\delta)}{n}.
\end{equation}
 A passive learner essentially uses the default $\phi$, which is constantly $1$, for a risk of $\rho(1) = \E[\psi^2(\vX)]$. But the $\phi$ that minimizes the bound is the solution to the following minimization problem: 
\begin{align}
&\text{Minimize}_\phi &&\E[\psi^2(\vX)/\phi(\vX)] \notag\\
&\text{subject to}&&\E[\phi(\vX)] = 1,\label{eq:minimization}\\ 
&&&\phi(\vx) \geq  \frac{c\log(c'n)\log(c''/\delta)}{n}\norm{\vx}_*^2,\quad \forall \vx \in \supx(D). \notag
\end{align}
The second constraint is due to the requirement $n \geq c R_{P_\phi}^2\log(c'n)\log(c''/\delta)$.
The following lemma bounds the risk of the optimal $\phi$. Its proof is provided in \appref{lemmaminproof}.

\begin{lemma}\label{lem:minsol}
Let $\phi^\opt$ be the solution to the minimization problem in \eqref{minimization}. Then for $n \geq O(d\log(d)\log(1/\delta))$,  $\E^2[\psi(\vX)] \leq \rho(\phi^\opt) \leq \E^2[\psi(\vX)](1 + O(d\log(n)\log(1/\delta)/n)).$
\end{lemma}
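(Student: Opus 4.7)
The plan is to prove the two inequalities separately. The lower bound $\rho(\phi^\opt) \geq \E^2[\psi(\vX)]$ follows from Cauchy--Schwarz applied to any feasible $\phi$: writing $\psi = \sqrt{\phi}\cdot (\psi/\sqrt{\phi})$, we get
\[
\E^2[\psi(\vX)] \;\leq\; \E[\phi(\vX)]\cdot \E[\psi^2(\vX)/\phi(\vX)] \;=\; \rho(\phi),
\]
using $\E[\phi(\vX)] = 1$. Since $\phi^\opt$ is feasible, this gives the lower bound.

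For the upper bound I would exhibit a feasible $\phi$ whose risk achieves the claimed value; since $\phi^\opt$ minimizes, this yields the bound. The unconstrained optimum of the objective subject only to $\E[\phi]=1$ is (by a standard Lagrangian / Cauchy--Schwarz equality argument) $\phi^*(\vx) = \psi(\vx)/\E[\psi(\vX)]$, which satisfies $\rho(\phi^*) = \E^2[\psi(\vX)]$ but need not obey the pointwise constraint. To repair it, write $\alpha := c\log(c'n)\log(c''/\delta)/n$ and consider the mixture
\[
\phi(\vx) \;:=\; (1-\beta)\,\phi^*(\vx) \;+\; \beta\,\frac{\norm{\vx}_*^2}{d},\qquad \beta := \alpha d.
\]
By \eqref{expnorm}, $\E[\norm{\vX}_*^2/d] = 1$, so $\E[\phi(\vX)] = 1$; and $\phi(\vx) \geq \beta \norm{\vx}_*^2/d = \alpha \norm{\vx}_*^2$, so the constraint holds.

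Now I would bound the risk of this mixture: since $\phi \geq (1-\beta)\phi^*$,
\[
\rho(\phi) \;=\; \E\!\left[\frac{\psi^2(\vX)}{\phi(\vX)}\right] \;\leq\; \frac{1}{1-\beta}\,\E\!\left[\frac{\psi^2(\vX)}{\phi^*(\vX)}\right] \;=\; \frac{\E^2[\psi(\vX)]}{1-\beta}.
\]
Finally, the assumption $n \geq O(d\log(d)\log(1/\delta))$ resolves the implicit $\log(n)$ to guarantee $\beta \leq 1/2$, so $1/(1-\beta) \leq 1 + 2\beta = 1 + O(d\log(n)\log(1/\delta)/n)$. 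Combining with $\rho(\phi^\opt) \leq \rho(\phi)$ yields the stated upper bound.

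There is no serious obstacle here; the only mild subtlety is that one must use the identity $\E[\norm{\vX}_*^2] = d$ to calibrate the mixture so that both the normalization $\E[\phi]=1$ and the pointwise constraint hold simultaneously, and then check that the $n$-threshold in the hypothesis is enough to absorb the $\log(n)$ appearing in $\alpha$ when controlling $1/(1-\beta)$.
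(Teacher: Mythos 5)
Your proof is correct, but it takes a different route from the paper for the upper bound. The paper asserts (without proof, via an implicit KKT/water-filling argument) that the constrained optimizer has the explicit form $\phi^\opt(\vx) = \max\{\norm{\vx}_*^2\xi,\; \psi(\vx)(1-\E[\norm{\vX}_*^2\xi\cdot\one[\vX\in H_\beta]])/\E[\psi(\vX)\one[\vX\notin H_\beta]]\}$ for some threshold $\beta$, deduces the pointwise bound $\phi^\opt(\vx) \geq \psi(\vx)(1-d\xi)/\E[\psi(\vX)]$, and plugs this in to get $\rho(\phi^\opt) \leq \E^2[\psi(\vX)]/(1-d\xi)$. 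You instead never need to know what $\phi^\opt$ looks like: you exhibit the feasible competitor $\phi = (1-d\xi)\psi/\E[\psi(\vX)] + \xi\norm{\vx}_*^2$ (calibrated by \eqref{expnorm} so that $\E[\phi(\vX)]=1$ and the pointwise constraint of \eqref{minimization} holds), bound its risk by $\E^2[\psi(\vX)]/(1-d\xi)$, and invoke optimality of $\phi^\opt$; the final absorption of $d\xi \leq 1/2$ under $n \geq O(d\log(d)\log(1/\delta))$ is identical in both arguments. Your route buys rigor and economy — it sidesteps the unproven structural characterization, which is the only delicate point in the paper's argument — at the cost of not describing the actual optimizer (which is not used elsewhere anyway). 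For the lower bound, your Cauchy--Schwarz argument $\E^2[\psi(\vX)] \leq \E[\phi(\vX)]\,\E[\psi^2(\vX)/\phi(\vX)]$ for any feasible $\phi$ is essentially the paper's appeal to the $\xi=0$ problem, made explicit and self-contained. Two negligible points worth a clause each: note that $1-d\xi \geq 0$ (guaranteed by the hypothesis on $n$) is also needed for your mixture to be a nonnegative weight function, and handle the degenerate set $\{\psi(\vX)=0\}$ (where $\psi^2/\phi^* $ is $0/0$) by observing those points contribute zero to the risk.
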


The ratio between the risk of $\phi^\opt$ and the risk of the default $\phi$ thus approaches $\E[\psi^2(\vX)]/\E^2[\psi(\vX)]$, and this is also the optimal factor of label complexity reduction. The ratio is $1$ for highly symmetric distributions, where the support of $D_X$ is on a sphere and all the noise variances are identical. In these cases, active learning is not helpful, even asymptotically. However, in the general case, this ratio is unbounded, and so is the potential for improvement from using active learning.
The crucial challenge is that without access to the conditional distribution $D_{Y \mid X}$, \eqref{minimization} cannot be solved directly. We consider the \emph{oracle} risk $\rho^\opt = \E^2[\psi(\vX)]$, which can be approached if an oracle divulges the optimal $\phi$ and $n \rightarrow \infty$. The goal of the active learner is to approach the oracle guarantee \emph{without} prior knowledge of $D_{Y\mid X}$.

\section{Approaching the Oracle Bound with Strata}\label{sec:strata}

To approximate the oracle guarantee, we borrow the stratification approach used in Monte-Carlo function integration \citep[e.g.,][]{Glasserman04}. Partition $\supx(D)$  into $K$ disjoint subsets $\cA = \{A_1,\ldots,A_K\}$,
and consider for $\phi$ only functions that are constant on each $A_i$ and such that $\E[\phi(\vX)] = 1$. Each of the functions in this class can be described by a vector $\va = (a_1,\ldots,a_K) \in (\pos)^K$. The value of the function on $\vx \in A_i$ is $\frac{a_i}{\sum_{j\in[K]} p_j a_j}$, where $p_j := \P[\vX \in A_j]$. Let $\phi_\va$ denote a function defined by $\va$, leaving the dependence on the partition $\cA$ implicit.
To calculate the risk of $\phi_\va$, denote $\mu_i := \E[\snorm^2 (\vX^\t \vw_\opt - Y)^2 \mid \vX \in A_i]$. From the definition of $\rho(\phi)$,
\begin{equation}\label{eq:phia}
\rho(\phi_\va) = \sum_{j\in[K]}p_j a_j \sum_{i\in[K]} \frac{p_i}{a_i}\mu_i.
\end{equation}
It is easy to verify that $\va^\opt$ such that $a^\opt_i = \sqrt{\mu_i}$ minimizes $\rho(\phi_\va)$, and 
\begin{equation}\label{eq:rhoopt}
\rho^\opt_\cA := \inf_{\va \in \reals_+^K} \rho(\phi_\va) = \rho(\phi_{\va^\opt}) =  (\sum_{i\in[K]}p_i \sqrt{\mu_i})^2.
\end{equation}

$\rho^\opt_\cA$ is the oracle risk for the fixed partition $\cA$. 
In comparison, the standard passive learner has risk $\rho(\phi_{\vec{1}}) = \sum_{i\in[K]}p_i \mu_i$. Thus, the ratio between the optimal risk and the default risk can be as large as $1/\min_{i} p_i$. 
Note that here, as in the definition of $\rho^\opt$ above, $\rho^\opt_\cA$ might not be achievable for samples up to a certain size, because of the additional requirement that $\phi$ not be too small (see \eqref{minimization}). Nonetheless, this optimistic value is useful as a comparison.

Consider an infinite sequence of partitions: for $j \in \nats$, $\cA^j = \{A_1^j,\ldots,A_{K_j}^j\}$, with $K_j \rightarrow \infty$. 
Similarly to \citet{CarpentierMu12}, under mild regularity assumptions, if the partitions have diameters and probabilities that approach zero, then $\rho^\opt_{\cA^j} \rightarrow \rho(\phi^\opt)$, achieving the optimal upper bound for \eqref{phibound}.
For a fixed partition $\cA$, the challenge is then to approach $\rho^*_\cA$ without prior knowledge of the true $\mu_i$'s, using relatively few extra labeled examples. 
In the next section we describe our active learning algorithm that does just that.

\section{Active Learning for Regression}\label{sec:active.learning}

To approach the optimal risk $\rho^*_\cA$,  we need a good estimate of $\mu_i$ for $i \in [K]$. 
Note that $\mu_i$ depends on the optimal predictor $\vw_\opt$, therefore its value depends on the entire distribution. We assume that the error of the label relative to the optimal predictor is bounded as follows: There exists a $b \geq 0$ such that  $(\vx^\t \vw_\opt - y)^2 \leq b^2\norm{\vx}_*^2$ for all $(\vx,y)$ in the support of $D$. This boundedness assumption can be replaced by an assumption on sub-Gaussian tails with similar results. Our assumption implies also $L_\opt = \E[(\vx^\t \vw_\opt - y)^2] \leq b^2\E[\norm{\vX}_*^2] = b^2d$, where the last equality follows from \eqref{expnorm}.

\begin{algorithm}[h]
\begin{algorithmic}[1]
\REQUIRE Confidence $\delta \in (0,1)$, label budget $m$, partition $\cA$.
\ENSURE $\hat{\vw} \in \reals^d$
\STATE $m_1 \leftarrow m^{4/5}/2$, $m_2 \leftarrow m^{4/5}/2$, $m_3 \leftarrow m - (m_1 + m_2)$.
\STATE $\delta_1 \leftarrow \delta/4$, $\delta_2 \leftarrow \delta/4$, $\delta_3 \leftarrow \delta/2$.
\STATE $S_1 \leftarrow \sample(P_{\phi[\Sigma]},m_1)$
\STATE $\hat{\vv} \leftarrow \alg(S_1,\delta_1)$  \label{step:run1}
\STATE $\Delta \leftarrow \sqrt{\frac{Cd^2 b^2\log(1/\delta_1)}{m_1}}$;\quad \label{step:delta} $\gamma \leftarrow (b + 2\Delta)^2\sqrt{K\log(2K/\delta_2)/m_2}$\label{step:gamma};\quad $t \leftarrow m_2/K$.
\FOR {$i = 1 \text{ to }K$}
\STATE $T_i \leftarrow \sample(Q_i,t)$.
\STATE $\tilde{\mu}_i \leftarrow \Theta_i\cdot\left(\frac{1}{t}\sum_{(\vx,y) \in T_i} (|\vx^\t \hat{\vv} - y| + \Delta)^2 + \gamma\right)$.
\STATE $\hat{a}_i \leftarrow \sqrt{\tilde{\mu}_i}$.
\ENDFOR
\STATE $\xi \leftarrow \frac{c\log(c'm_3)\log(c''/\delta_3)}{m_3}$
\STATE Set $\hat{\phi}$ such that for $\vx \in A_i$, $\hat{\phi}(\vx) := \norm{\vx}_*^2\cdot\xi + (1-d\xi)\frac{\hat a_i}{\sum_{j} p_j \hat a_j}$.
\STATE $S_3 \leftarrow \sample(P_{\hat{\phi}},m_3)$.
\STATE $\hat{\vw} \leftarrow \alg(S_3,\delta_3)$.
\end{algorithmic}
\caption{Active Regression}
\label{alg:actreg}
\end{algorithm}

Our active regression algorithm, listed in \algref{actreg}, operates in three stages. In the first stage, the goal is to find a crude loss optimizer $\hat{\vv}$, so as to later estimate $\mu_i$. To find this optimizer, the algorithm draws a labeled sample of size $m_1$ from the distribution $P_{\phi[\Sigma]}$, where 
$\phi[\Sigma](\vx) := \frac{1}{d}\vx^\t \Sigma^{-1} \vx = \frac{1}{d}\snorma{\vx}^2.$
Note that $\rho(\phi[\Sigma]) = d\cdot\E[(\vX \vw_\opt - Y)^2] = dL_\opt$. In addition, 
 $R^2_{P_{\phi[\Sigma]}} = d$. Consequently, by \eqref{phibound}, applying \alg\ to $m_1\geq O(d\log(d)\log(1/\delta_1))$ random draws from $P_{\phi[\Sigma]}$ gets, 
with probability $1-\delta_1$
\begin{equation}\label{eq:dbound}
L(\hat{\vv}) - L_\opt  = \norm{\hat{\vv} - \vw_\opt}^2 \leq \frac{CdL_\opt\log(1/\delta_1)}{m_1} \leq \frac{Cd^2b^2\log(1/\delta_1)}{m_1}.
\end{equation}
In \cite{NeedellSrWa13} a similar distribution is used to speed up gradient descent for convex losses. Here, we make use of $\phi[\Sigma]$ as a stepping stone in order to approach the optimal $\phi$ at a rate that does not depend on the condition number of $D$. Denote by $\cE$ the event that \eqref{dbound} holds.

In the second stage, estimates for $\mu_i$, denoted $\tilde{\mu}_i$, are calculated from labeled samples that are drawn from another set of probability distributions, $Q_i$ for $i \in [K]$. These distributions are defined as follows. Denote $\Theta_i = \E[\norm{\vX}_*^4 \mid \vX \in A_i].$ 
For $\vx \in \reals^d,y\in \reals$, let $\Gamma_i(\vx,y) = \{ (\tilde{\vx},\tilde{y}) \in A_i \times \reals \mid \vx = \frac{\tilde{\vx}}{\snorma{\tilde{\vx}}}, y = \frac{\tilde{y}}{\snorma{\tilde{\vx}}}\},$
and define $Q_i$ by 
$dQ_i(\vX,Y) = \frac{1}{\Theta_i}\int_{(\tilde{\vX},\tilde{Y}) \in \Gamma_i(\vX,Y)} \norm{\tilde{\vX}}_*^4\, dD(\tilde{\vX},\tilde{Y}).
$
Clearly, for all $\vx \in \supx(Q_i)$, $\norm{\vx}_* = 1$.
Drawing labeled examples from $Q_i$ can be done using rejection sampling, similarly to $P_\phi$. The use of the $Q_i$ distributions in the second stage again helps avoid a dependence on the condition number of $D$ in the convergence rates.

In the last stage, a weight function $\hat{\phi}$ is determined based on the estimated $\tilde{\mu}_i$. A labeled sample is drawn from $P_{\hat{\phi}}$, and the algorithm returns the predictor resulting from running \alg\ on this sample. 
The following theorem gives our main result, a finite sample convergence rate guarantee.

\begin{theorem}\label{thm:main}
Let $b \geq 0$ such that $(\vx^\t \vw_\opt - y)^2 \leq
b^2\norm{\vx}_*^2$ for all $(\vx,y)$ in the support of $D$. Let
$\Lambda_D = \E[\snorm^4]$. If \algref{actreg} is executed with
$\delta$ and $m$ such that $m \geq O(d\log(d)\log(1/\delta))^{5/4}$, then it draws $m$ labels, and with
probability $1-\delta$,
\begin{align*}
&L(\hat{\vw}) - L_\opt \leq 
\frac{C\rho^\opt_\cA\log(3/\delta)}{m}  +\\
&O\Bigg(\frac{\log(1/\delta)}{m^{6/5}}\rho^\opt_\cA
+ \frac{d^{1/2}\Lambda_D^{1/4}\log^{5/4}(1/\delta)}{m^{6/5}}b^{1/2}{\rho^\opt_\cA}^{3/4}+\frac{d\Lambda_D^{1/2}K^{1/4}\log^{1/4}(K/\delta)\log(1/\delta)}{m^{6/5}}b{\rho^\opt_\cA}^{1/2}\Bigg). 
\end{align*}
\end{theorem}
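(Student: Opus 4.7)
The plan is to track a ``good event'' for each of the three stages of \algref{actreg}, union--bound them, and use \eqref{phibound} applied to $\hat\phi$ as the final link. The label budget is automatic: the algorithm requests $m_1 + K\cdot t + m_3 = m_1 + m_2 + m_3 = m$ labels. I take $\delta_1=\delta_2=\delta/4$ and $\delta_3=\delta/2$ as in the algorithm so the three failure probabilities sum to $\delta$.

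For Stage~1, observe that $\E[\phi[\Sigma](\vX)] = \E[\snorm^2]/d = 1$ by \eqref{expnorm}, $R^2_{P_{\phi[\Sigma]}} = d$, and $\rho(\phi[\Sigma]) = dL_\opt \leq d^2 b^2$. Applying \eqref{phibound} to $P_{\phi[\Sigma]}$ with sample size $m_1$ yields \eqref{dbound}, giving the event $\cE: \|\hat\vv - \vw_\opt\|^2 \leq \Delta^2$ of probability $\geq 1-\delta_1$; its sample size precondition is secured by the hypothesis $m \geq O(d\log d\log(1/\delta))^{5/4}$. For Stage~2, the crucial observation is that the support of each $Q_i$ lies in $\{\vx:\snorm = 1\}$, which strips the condition number out of the Hoeffding range. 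Transporting the boundedness hypothesis through the $Q_i$ rescaling gives $|\vx^\t\vw_\opt - y|\leq b$ on $\mathrm{supp}(Q_i)$; under $\cE$ also $|\vx^\t(\hat\vv-\vw_\opt)|\leq \Delta$, so $(\vx^\t\vw_\opt - y)^2 \leq (|\vx^\t\hat\vv - y|+\Delta)^2 \leq (b+2\Delta)^2$. A Hoeffding bound per stratum (range $(b+2\Delta)^2$, sample size $t = m_2/K$) plus a union bound over the $K$ strata defines $\cE'$ of probability $\geq 1-\delta_2$ on which each empirical mean is within $\gamma$ of its $Q_i$-expectation. Using Jensen, $\E_{Q_i}|\vx^\t\vw_\opt - y|\leq \sqrt{\mu_i/\Theta_i}$, and expanding $(|\vx^\t\hat\vv - y|+\Delta)^2$ both below by $(\vx^\t\vw_\opt - y)^2$ and above by $(|\vx^\t\vw_\opt - y|+2\Delta)^2$, one obtains on $\cE\cap\cE'$ the two-sided estimate
$\mu_i \leq \tilde\mu_i \leq \mu_i + 4\Delta\sqrt{\Theta_i\mu_i} + 4\Delta^2\Theta_i + 2\Theta_i\gamma,$
whence $\sqrt{\mu_i}\leq \hat a_i \leq \sqrt{\mu_i}+2\Delta\sqrt{\Theta_i}+\sqrt{2\Theta_i\gamma}$.

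For Stage~3, a direct computation shows that $\hat\phi$ is the convex combination $\hat\phi = d\xi\,\phi[\Sigma] + (1-d\xi)\,\phi_{\hat a}$ of two mean-one densities, so $\E[\hat\phi(\vX)]=1$; moreover $\hat\phi(\vx)\geq \xi\snorm^2$, and the definition of $\xi$ is precisely the threshold that satisfies the feasibility constraint of \eqref{minimization} at $n=m_3$. Hence \eqref{phibound} yields $L(\hat\vw)-L_\opt \leq C\rho(\hat\phi)\log(1/\delta_3)/m_3$ on an event $\cE''$ of probability $\geq 1-\delta_3$. From the mixture form, $\hat\phi\geq (1-d\xi)\phi_{\hat a}$, so $\rho(\hat\phi)\leq \rho(\phi_{\hat a})/(1-d\xi)$. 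Using \eqref{phia}, the inequality $\mu_i\leq \tilde\mu_i$ bounds $\sum_i p_i\mu_i/\hat a_i \leq \sum_i p_i\sqrt{\mu_i} = \sqrt{\rho^\opt_\cA}$, while Cauchy--Schwarz bounds $\sum_j p_j\sqrt{\Theta_j}\leq \sqrt{\sum_j p_j\Theta_j} = \sqrt{\Lambda_D}$; combined with the Stage~2 upper bound on $\hat a_j$ these give $\rho(\phi_{\hat a})\leq \rho^\opt_\cA + 2\Delta\sqrt{\rho^\opt_\cA\Lambda_D} + \sqrt{2\gamma\rho^\opt_\cA\Lambda_D}$.

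Finally, a union bound over $\cE,\cE',\cE''$ controls the total failure probability by $\delta$. Plugging $\Delta = O(db\log^{1/2}(1/\delta)/m^{2/5})$, $\sqrt\gamma = O(bK^{1/4}\log^{1/4}(K/\delta)/m^{1/5})$, and $1/m_3 = 1/m + O(1/m^{6/5})$ into $C\rho(\hat\phi)\log(1/\delta_3)/m_3$ peels off the leading $C\rho^\opt_\cA\log(3/\delta)/m$ plus a residual of order $\rho^\opt_\cA\log(1/\delta)/m^{6/5}$, and transforms the two cross terms $\Delta\sqrt{\rho^\opt_\cA\Lambda_D}\log(1/\delta)/m$ and $\sqrt{\gamma\rho^\opt_\cA\Lambda_D}\log(1/\delta)/m$ into the two remaining summands stated, using AM--GM where necessary to repackage the first into the form $d^{1/2}b^{1/2}(\rho^\opt_\cA)^{3/4}\Lambda_D^{1/4}/m^{6/5}$ by trading a factor of $\sqrt{\rho^\opt_\cA\log(1/\delta)/m}$ against part of the correction. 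The main obstacle is the bookkeeping of error propagation: Stage~1's localization error $\Delta$ inflates the Stage~2 estimates $\hat a_i$, which in turn enlarge $\rho(\hat\phi)$, and the choice $m_1=m_2=m^{4/5}/2$ is tuned precisely to balance these against the $1/m_3$ factor from Stage~3 so the corrections decay as $1/m^{6/5}$; crucially, the use of $\phi[\Sigma]$ in Stage~1 and of $Q_i$ in Stage~2 is what prevents the condition number $R_D^2$ from appearing in these intermediate concentration bounds.
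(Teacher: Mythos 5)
Your proof follows essentially the same route as the paper: the same three good events (Stage 1 via $\phi[\Sigma]$ and \eqref{phibound}, Stage 2 via per-stratum Hoeffding on the $Q_i$'s and the Jensen comparison giving $\mu_i \leq \tilde\mu_i \leq \mu_i + 4\Delta\sqrt{\Theta_i\mu_i} + \Theta_i(4\Delta^2 + 2\gamma)$, Stage 3 via the mixture form of $\hat\phi$, its feasibility for \eqref{phibound}, and the $(1-d\xi)^{-1}$ comparison of $\rho(\hat\phi)$ to $\rho(\phi_{\hat\va})$ as in \lemref{mutilde}), followed by the same substitution of $m_1=m_2=m^{4/5}/2$, $m_3=m-m^{4/5}$. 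Your only deviation is bounding $\hat a_i \leq \sqrt{\mu_i}+2\Delta\sqrt{\Theta_i}+\sqrt{2\Theta_i\gamma}$ via the perfect square rather than the paper's Cauchy--Schwarz step in \lemref{mutilde}, which gives a slightly tighter intermediate bound (no ${\rho^\opt_\cA}^{3/4}$ cross term needed) and hence still implies the stated result.
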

The theorem shows that the learning rate of the active learner approaches the oracle rate for the given partition. With an infinite sequence of partitions with $K$ an increasing function of $m$, the optimal oracle risk can also be approached. The rate of convergence to the oracle rate does not depend on the condition number of $D$, unlike the passive learning rate. In addition, $m = O(d\log(d)\log(1/\delta))^{5/4}$ suffices to approach the optimal rate, whereas $m = \Omega(d)$ is obviously necessary for any learner. It is interesting that also in active learning for classification, it has been observed that active learning in a non-realizable setting requires a super-linear dependence on $d$ \citep[See, e.g.,][]{DasguptaHsMo08}. Whether this dependence is unavoidable for active regression is an open question.
\thmref{main} is be proved via a series of lemmas. First, we show that if $\tilde{\mu}_i$ is a good approximation of $\mu_i$ then $\rho_\cA(\hat{\phi})$ can be bounded as a function of the oracle risk for $\cA$. 

\begin{lemma}\label{lem:mutilde}
Suppose $m_3 \geq O(d\log(d)\log(1/\delta_3))$, and let $\hat{\phi}$ as in \algref{actreg}. If, for some $\alpha,\beta \geq 0$, 
\begin{equation}\label{eq:mubound}
\mu_i \leq \tilde{\mu_i} \leq \mu_i + \alpha_i\sqrt{\mu_i} + \beta_i,
\end{equation}
then 
\begin{equation*}
\rho_\cA(\hat{\phi}) \leq (1 + O(d\log(m_3)\log(1/\delta_3)/m_3))(\rho^\opt_\cA + (\sum_{i}p_i\alpha_i)^{1/2}{\rho^\opt_\cA}^{3/4} + (\sum_{i}p_i\beta_i)^{1/2}{\rho_\cA^\opt}^{1/2}).
\end{equation*}
\end{lemma}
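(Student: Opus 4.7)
The plan is to first upper bound $\rho_\cA(\hat\phi)$ by an expression that involves only the piecewise-constant part of $\hat\phi$, then evaluate that expression via the closed-form \eqref{phia}, and finally propagate the one-sided bound \eqref{mubound} through two short algebraic steps. Concretely, for every $\vx\in A_i$ the term $\snorma{\vx}^2\xi$ in $\hat\phi$ is nonnegative, so $\hat\phi(\vx)\geq(1-d\xi)\hat a_i/\bigl(\sum_{j}p_j\hat a_j\bigr)$. Combined with $\E[\psi^2(\vX)\one[\vX\in A_i]]=p_i\mu_i$ and the closed form \eqref{phia}, this gives
\[
\rho_\cA(\hat\phi)=\E\!\left[\frac{\psi^2(\vX)}{\hat\phi(\vX)}\right]\;\leq\;\frac{1}{1-d\xi}\Bigl(\sum_{j}p_j\hat a_j\Bigr)\Bigl(\sum_{i}\frac{p_i\mu_i}{\hat a_i}\Bigr).
\]
The hypothesis $m_3\geq O(d\log(d)\log(1/\delta_3))$ enters here only to force $d\xi\leq 1/2$, so that $1/(1-d\xi)\leq 1+O(d\log(m_3)\log(1/\delta_3)/m_3)$, matching the prefactor in the statement.

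Next I would handle the two sums separately using the two sides of \eqref{mubound}. The lower bound $\tilde\mu_i\geq\mu_i$ yields $\hat a_i=\sqrt{\tilde\mu_i}\geq\sqrt{\mu_i}$, so $\sum_{i}p_i\mu_i/\hat a_i\leq\sum_{i}p_i\sqrt{\mu_i}={\rho^\opt_\cA}^{1/2}$. For the upper bound, apply $\sqrt{a+b+c}\leq\sqrt a+\sqrt b+\sqrt c$ to obtain $\hat a_j\leq\sqrt{\mu_j}+\alpha_j^{1/2}\mu_j^{1/4}+\beta_j^{1/2}$; Cauchy--Schwarz with the pairing $p_j^{1/2}\alpha_j^{1/2}\cdot p_j^{1/2}\mu_j^{1/4}$ bounds the $\alpha$-cross sum by $(\sum_{j}p_j\alpha_j)^{1/2}{\rho^\opt_\cA}^{1/4}$, and the standard pairing gives $\sum_{j}p_j\beta_j^{1/2}\leq(\sum_{j}p_j\beta_j)^{1/2}$. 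Multiplying the bounds for the two factors then yields
\[
\Bigl(\sum_{j}p_j\hat a_j\Bigr)\Bigl(\sum_{i}\frac{p_i\mu_i}{\hat a_i}\Bigr)\leq\rho^\opt_\cA+\Bigl(\sum_{j}p_j\alpha_j\Bigr)^{\!1/2}{\rho^\opt_\cA}^{3/4}+\Bigl(\sum_{j}p_j\beta_j\Bigr)^{\!1/2}{\rho^\opt_\cA}^{1/2},
\]
and combining with the $1/(1-d\xi)$ prefactor from the first step closes the argument.

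The proof is essentially a deterministic algebraic computation, since all of the probabilistic content is already pushed into the hypothesis \eqref{mubound} (whose verification is the work of a separate, later lemma). The only genuinely subtle choice is the pairing in the Cauchy--Schwarz bound on the $\alpha$-term: matching $p_j^{1/2}\alpha_j^{1/2}$ with $p_j^{1/2}\mu_j^{1/4}$ is precisely what produces the ${\rho^\opt_\cA}^{3/4}$ exponent that appears in the lemma. A naive pairing (for example, grouping $\alpha_j^{1/2}$ with a constant rather than with $\mu_j^{1/4}$) would yield a weaker, $K$-dependent bound and miss the correct interpolation between the $\alpha$ and $\beta$ error scales.
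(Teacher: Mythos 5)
Your proof is correct and follows essentially the same route as the paper: lower-bound $\hat\phi$ by its piecewise-constant part to pick up the $1/(1-d\xi)$ factor, use $\tilde\mu_i\geq\mu_i$ on one factor and subadditivity of the square root plus Cauchy--Schwarz and Jensen on the other. Your intermediate Cauchy--Schwarz bound $(\sum_j p_j\alpha_j)^{1/2}{\rho^\opt_\cA}^{1/4}$ is in fact the cleaner statement of the step the paper writes with the $3/4$ exponent already folded in.
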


\begin{proof} 
We have $\forall \vx \in A_i, \hat{\phi}(\vx) \geq (1-d\xi)\frac{\hat{a}_i}{\sum_{j} p_j \hat{a}_j}$, where $\xi = \frac{c\log(c'm_3)\log(c''/\delta)}{m_3}$. Therefore
\begin{align*}
\rho(\hat{\phi}) \equiv \E[\psi^2(\vX)/\hat{\phi}(\vX)] &\leq \frac{1}{1-d\xi}\sum_{j} p_j \hat{a}_j\sum_{i}p_i \cdot\E[\psi^2(\vX)/\hat{a_i} \mid \vX \in A_i] \\
&= \frac{1}{1-d\xi}\sum_{j} p_j \hat{a}_j\sum_i p_i\mu_i/\hat{a}_i = (1 + \frac{d\xi}{1-d\xi})\rho(\phi_{\hat{\va}}).
\end{align*}
For $m_3 \geq O(d\log(d)\log(1/\delta_3))$, $d\xi \leq \frac{1}{2}$,\footnote{Using the fact that $m \geq O(d\log(d)\log(1/\delta_3))$ implies $m \geq O(d\log(m)\log(1/\delta_3))$.}
 therefore $\frac{d\xi}{1-d\xi} \leq 2d\xi$. It follows 
\begin{equation}\label{eq:rhohatphi}
\rho(\hat{\phi}) \leq (1+ O(d\log(m_3)\log(1/\delta_3)/m_3))\rho(\phi_{\hat{\va}}).
\end{equation}
By \eqref{mubound},
\begin{align*}
\rho_\cA(\phi_{\hat{\va}}) &= \sum_{j} p_j \sqrt{\tilde{\mu}_j}\sum_{i}p_i\mu_i/\sqrt{\tilde{\mu}_i} \\
&\leq \sum_{j}p_j(\sqrt{\mu_j} + \sqrt{\alpha_j}\mu_j^{1/4} + \sqrt{\beta_j}) \sum_{i}p_i \sqrt{\mu_i}\\
&= (\sum_{i}p_i\sqrt{\mu_i})^2 + 
(\sum_{j}p_j\sqrt{\alpha_j}\mu_j^{1/4})(\sum_{i}p_i\sqrt{\mu_i})
+ (\sum_{j} p_j\sqrt{\beta_j})(\sum_{i} p_i\sqrt{\mu_i}).\\
&= \rho^\opt_\cA + (\sum_{j}p_j\sqrt{\alpha_j}\mu_j^{1/4}){\rho^\opt_\cA}^{1/2} + (\sum_{j} p_j\sqrt{\beta_j}){\rho^\opt_\cA}^{1/2}.
\end{align*}
The last equality is since $\rho^\opt_\cA = (\sum_{i}p_i\sqrt{\mu_i})^2$.
By Cauchy-Schwartz, $(\sum_{j}p_j\sqrt{\alpha_j}\mu_j^{1/4}) \leq 
(\sum_{i}p_i\alpha_i)^{1/2}{\rho^\opt_\cA}^{3/4}.$
By Jensen's inequality, $\sum_{j} p_j\sqrt{\beta_j} \leq (\sum_{j} p_j\beta_j)^{1/2}$. Combined with \eqref{rhoopt} and \eqref{rhohatphi}, the lemma directly follows.
\end{proof}

We now show that \eqref{mubound} holds and provide explicit values for $\alpha$ and $\beta$. Define 
\[
\nu_i := \Theta_i\cdot\E_{Q_i}[(|\vX^\t \hat{\vw} - Y| + \Delta)^2],\quad\text{ and }\quad
\hat{\nu}_i := \frac{\Theta_i}{t}\sum_{(\vx,y) \in T_i} (|\vx^\t \hat{\vw} - y| + \Delta)^2.
\]
Note that $\tilde{\mu}_i = \hat{\nu}_i + \Theta_i\gamma$. We will relate $\hat{\nu}_i$ to $\nu_i$, and then $\nu_i$ to $\mu_i$, to conclude a bound of the form in \eqref{mubound} for $\tilde{\mu}_i$. 
First, note that if $m_1 \geq O(d\log(d)\log(1/\delta_1)$ and $\cE$ holds, then for any $\vx \in \cup_{i \in [K]}\supx(Q_i)$,
\begin{equation}\label{eq:delta}
|\vx^\t \hat{\vv} - \vx^\t \vw_\opt| \leq \norm{\vx}_*\norm{\hat{\vv} - \vw_\opt} \leq \sqrt{\frac{Cd^2b^2\log(1/\delta_1)}{m_1}} \equiv \Delta.
\end{equation}
The second inequality stems from $\norm{\vx}_* = 1$ for $\vx \in \cup_{i \in [K]}\supx(Q_i)$, and \eqref{dbound}. This is useful in the following lemma, which relates $\hat{\nu}_i$ with $\nu_i$.
\begin{lemma}\label{lem:nuhatbound}
Suppose that $m_1 \geq O(d\log(d)\log(1/\delta_1))$ and $\cE$ holds.
Then with probability $1-\delta_2$ over the draw of $T_1,\ldots,T_K$,
for all $i \in [K]$, 
$|\hat{\nu}_i - \nu_i| \leq \Theta_i(b + 2\Delta)^2\sqrt{K\log(2K/\delta_2)/m_2} \equiv \Theta_i\gamma. $
\end{lemma}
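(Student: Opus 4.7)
The plan is to bound the range of the summand in $\hat\nu_i$ uniformly over the support of $Q_i$, and then apply Hoeffding's inequality together with a union bound across $i \in [K]$. Conditioning on $\cE$ plays no role other than giving us $\norm{\hat{\vv} - \vw_\opt} \leq \Delta$, since \eqref{delta} shows exactly that this is the consequence of $\cE$ and $m_1 \geq O(d\log(d)\log(1/\delta_1))$.

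The first step is the pointwise bound. Pick any $(\vx,y)$ in the support of $Q_i$. By the definition of $Q_i$, such a point corresponds to some $(\tilde{\vx},\tilde{y})$ in the support of $D$ restricted to $A_i$, with $\vx = \tilde{\vx}/\snorma{\tilde{\vx}}$ and $y = \tilde{y}/\snorma{\tilde{\vx}}$; in particular $\snorma{\vx} = 1$. Then
\begin{align*}
|\vx^\t \hat{\vv} - y|
&= \frac{|\tilde{\vx}^\t \hat{\vv} - \tilde{y}|}{\snorma{\tilde{\vx}}}
\leq \frac{|\tilde{\vx}^\t(\hat{\vv} - \vw_\opt)| + |\tilde{\vx}^\t \vw_\opt - \tilde{y}|}{\snorma{\tilde{\vx}}} \\
&\leq \frac{\snorma{\tilde{\vx}}\norm{\hat{\vv}-\vw_\opt} + b\,\snorma{\tilde{\vx}}}{\snorma{\tilde{\vx}}} \leq \Delta + b,
\end{align*}
using Cauchy--Schwartz in the $\Sigma$/$\Sigma^{-1}$ pairing, the boundedness assumption $(\tilde{\vx}^\t \vw_\opt - \tilde{y})^2 \leq b^2\snorma{\tilde{\vx}}^2$, and \eqref{dbound}. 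Hence $(|\vx^\t \hat{\vv} - y| + \Delta)^2 \leq (b+2\Delta)^2$, so each summand in $\hat\nu_i/\Theta_i$ is a random variable in $[0,(b+2\Delta)^2]$ with mean $\nu_i/\Theta_i$.

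Next, applying Hoeffding's inequality to the $t = m_2/K$ i.i.d.\ samples in $T_i$ drawn from $Q_i$, for any $\epsilon > 0$,
\[
\P\!\left[|\hat\nu_i - \nu_i| > \Theta_i\epsilon\right] \leq 2\exp\!\left(-\frac{2t\,\epsilon^2}{(b+2\Delta)^4}\right).
\]
Setting the right-hand side to $\delta_2/K$ gives $\epsilon = (b+2\Delta)^2\sqrt{\log(2K/\delta_2)/(2t)} = (b+2\Delta)^2\sqrt{K\log(2K/\delta_2)/(2m_2)}$, which is bounded above by $\gamma$ as defined in the algorithm. A union bound over $i \in [K]$ yields the claim with probability $1-\delta_2$.

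There is no real obstacle here; the only thing to be careful about is the rescaling built into $Q_i$, which is precisely what makes the uniform bound $|\vx^\t \hat{\vv} - y| \leq b + \Delta$ work out cleanly (the factor $\snorma{\tilde\vx}$ in the denominator cancels the one arising from both the prediction error and the label noise bound). Once that cancellation is in place, Hoeffding plus union bound delivers the lemma.
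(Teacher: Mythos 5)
Your proof is correct and follows essentially the same route as the paper's: bound the summand $(|\vx^\t\hat{\vv}-y|+\Delta)^2$ by $(b+2\Delta)^2$ almost surely on the support of $Q_i$ (using the unit $\snorma{\cdot}$ normalization built into $Q_i$, the boundedness assumption, and the consequence of $\cE$), then apply Hoeffding to the $t=m_2/K$ samples with a union bound over $i\in[K]$. Your version is in fact marginally tighter, keeping the factor $\sqrt{1/(2t)}$ from Hoeffding where the paper settles for $\sqrt{1/t}$.
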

\begin{proof}
For a fixed $\hat{\vv}$, $\hat{\nu}_i/\Theta_i$ is the empirical average of i.i.d.~samples of the random variable $Z = (|\vX^\t \hat{\vv} - Y| + \Delta)^2,$
where $(\vX,Y)$ is drawn according to $Q_i$. 
We now give an upper bound for $Z$ with probability $1$.
Let $(\tilde{\vX}, \tilde{Y})$ in the support of $D$ such that 
$\vX = \tilde{\vX}/\norm{\tilde{\vX}}_*$ and $Y = \tilde{Y}/\norm{\tilde{\vX}}_*$. Then $|\vX^\t\vw_\opt - Y| = |\tilde{\vX}^\t\vw_\opt - \tilde{Y}|/\norm{\tilde{\vX}}_* \leq b.$
If $\cE$ holds and $m_1 \geq O(d\log(d)\log(1/\delta_1))$, 
\[
Z \leq (|\vX^\t\hat{\vv} -\vX^\t\vw_\opt| +  |\vX^\t\vw_\opt - Y| + \Delta)^2 
\leq (b + 2\Delta)^2,
\]
where the last inequality follows from \eqref{delta}. By Hoeffding's inequality, for every $i$, with probability $1-\delta_2$, $|\hat{\nu}_i - \nu_i| \leq \Theta_i(b + 2\Delta)^2\sqrt{\log(2/\delta_2)/t}.$
The statement of the lemma follows from a union bound over $i \in [K]$ and $t = m_2/K$.
\end{proof}
The following lemma, proved in \appref{lemmanuproof}, provides the desired relationship between $\nu_i$ and $\mu_i$. 
\begin{lemma}\label{lem:nubound}
If $m_1 \geq O(d\log(d)\log(1/\delta_1))$ and $\cE$ holds, 
then $\mu_i \leq \nu_i \leq \mu_i + 4\Delta\sqrt{\Theta_i\mu_i} + 4\Delta^2\Theta_i.$
\end{lemma}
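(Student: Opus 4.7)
The plan is to unwind the change of variables implicit in $Q_i$ to rewrite $\nu_i$ as an expectation under $D$ conditioned on $\vX \in A_i$, and then sandwich it against $\mu_i$ using the pointwise control on $\hat{\vv} - \vw_\opt$ supplied by event $\cE$. Performing the change of variables exactly as in the derivation of $L(\vw, P_\phi)$ in \secref{oracle} (with the weight $\snorma{\tilde{\vX}}^4/\Theta_i$ playing the role of $\phi(\tilde{\vX})$) and noting that the substitution $\vX \mapsto \vX/\snorma{\vX}$, $Y \mapsto Y/\snorma{\vX}$ turns $(|\vX^\t\hat{\vv}-Y|+\Delta)^2$ into $(|\vX^\t\hat{\vv}-Y|+\Delta\,\snorma{\vX})^2/\snorma{\vX}^2$ yields
\[
\nu_i \;=\; \E\!\left[\,\snorma{\vX}^2\bigl(|\vX^\t \hat{\vv} - Y| + \Delta\,\snorma{\vX}\bigr)^2 \;\Big|\; \vX \in A_i\right].
\]

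Abbreviate $r = \snorma{\vX}$, $u = |\vX^\t \hat{\vv} - Y|$, and $u_0 = |\vX^\t \vw_\opt - Y|$, so that the previous display reads $\nu_i = \E[r^2(u+\Delta r)^2 \mid \vX \in A_i]$ while $\mu_i = \E[r^2 u_0^2 \mid \vX \in A_i]$ and $\Theta_i = \E[r^4 \mid \vX \in A_i]$. Under $\cE$ with $m_1 \geq O(d\log(d)\log(1/\delta_1))$, the argument in \eqref{delta} gives $|\vX^\t\hat{\vv} - \vX^\t\vw_\opt| \leq \Delta\,r$ pointwise on $\supx(D)$, so the triangle inequality produces $u_0 \leq u + \Delta r \leq u_0 + 2\Delta r$, with all three quantities nonnegative. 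Squaring, multiplying by $r^2$, and taking conditional expectations gives $\nu_i \geq \mu_i$ immediately, and
\[
\nu_i \;\leq\; \mu_i \;+\; 4\Delta\,\E[r^3 u_0 \mid \vX \in A_i] \;+\; 4\Delta^2\Theta_i.
\]
A single Cauchy--Schwarz, $\E[r^3 u_0 \mid \vX \in A_i] = \E[r\cdot r^2 u_0 \mid \vX \in A_i] \leq \sqrt{\E[r^4\mid \vX\in A_i]\,\E[r^2 u_0^2\mid \vX\in A_i]} = \sqrt{\Theta_i\mu_i}$, then gives the claimed upper bound.

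There is no genuine obstacle: the only mildly delicate step is the change of variables in the first display, where one must track the $\snorma{\vX}^4$-weighting together with the simultaneous rescaling of both $\vX$ and $Y$ by $\snorma{\vX}$ to confirm that the net effect is a factor of $r^2$ surviving outside the squared expression and an extra $\Delta r$ appearing inside it. After that, everything reduces to the triangle inequality, Cauchy--Schwarz, and arithmetic.
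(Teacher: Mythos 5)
Your proof is correct and takes essentially the same route as the paper, which just performs the change of variables in the opposite direction: it rewrites $\mu_i = \Theta_i\,\E_{Q_i}[(\vX^\t\vw_\opt - Y)^2]$ and works under $Q_i$, where $\norm{\vx}_*=1$, so the pointwise bound from $\cE$ is $\Delta$ rather than $\Delta r$ and the cross term is absorbed via Jensen's inequality $\E_{Q_i}[|\vX^\t\vw_\opt-Y|]\le\sqrt{\E_{Q_i}[(\vX^\t\vw_\opt-Y)^2]}$, which is exactly your Cauchy--Schwarz step in the other coordinates. One cosmetic nit: to obtain $\sqrt{\Theta_i\mu_i}$ the Cauchy--Schwarz split should be $r^2\cdot(r u_0)$ rather than $r\cdot(r^2 u_0)$ (the latter would give $\sqrt{\E[r^2\mid\vX\in A_i]\,\E[r^4u_0^2\mid\vX\in A_i]}$), though the inequality you state is the correct one.
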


We are now ready to prove \thmref{main}. 
\begin{proof}[Proof of \thmref{main}]
From the condition on $m$ and the definition of $m_1,m_3$ in \algref{actreg} we have $m_1 \geq O(d\log(d/\delta_1))$ and $m_3 \geq O(d\log(d/\delta_3))$. Therefore the inequalities in \lemref{nubound}, \lemref{nuhatbound} and \eqref{phibound} (with $n,\delta,\phi$ substituted with $m_3,\delta_3,\hat{\phi}$) hold simultaneously with probability $1-\delta_1-\delta_2-\delta_3$. For \eqref{phibound}, note that $\frac{\norm{\vx}_*}{\hat{\phi}(\vx)} \geq \xi$, thus $m_3 \geq cR^2_{P_{\hat{\phi}}}\log(c'n)\log(c''/\delta_3)$ as required.

Combining \lemref{nubound} and \lemref{nuhatbound}, and noting that $\tilde{\mu}_i = \hat{\nu}_i + \Theta_i\gamma$, we conclude that 
\[
\mu_i \leq \tilde{\mu}_i \leq \mu_i + 4\Delta\sqrt{\Theta_i\mu_i} + \Theta_i(4\Delta^2 + 2\gamma).
\]
By \lemref{mutilde}, it follows that 
\begin{align*}
\rho_\cA(\hat{\phi}) &\leq \rho^\opt_\cA + 2\sqrt{\Delta}(\sum_{i \in [K]}p_i\sqrt{\Theta_i})^{1/2}{\rho^\opt_\cA}^{3/4} + \sqrt{4\Delta^2+2\gamma}\cdot(\sum_{i \in [K]}p_i\Theta_i)^{1/2}{\rho^\opt_\cA}^{1/2} + \bar{O}(\frac{\log(m_3)}{m_3})\\
&\leq 
\rho^\opt_\cA + 2\Delta^{1/2}\Lambda_D^{1/4}{\rho^\opt_\cA}^{3/4} + \sqrt{4\Delta^2+2\gamma}\cdot\Lambda_D^{1/2}{\rho^\opt_\cA}^{1/2} + \bar{O}(\log(m_3)/m_3).
\end{align*}
The last inequality follows since $\sum_{i\in[K]} p_i \Theta_i = \Lambda_D$. We use $\bar{O}$ to absorb parameters that already appear in the other terms of the bound. 
Combining this with \eqref{phibound}, 
\begin{align*}
&L(\hat{\vw}) - L_\opt \leq \frac{C\rho^\opt_\cA\log(1/\delta_3)}{m_3} + \\
&\quad\frac{C\log(1/\delta_3)}{m_3}\left(2\Delta^{1/2}\Lambda_D^{1/4}{\rho^\opt_\cA}^{3/4} + (2\Delta+\sqrt{2\gamma})\cdot\Lambda_D^{1/2}{\rho^\opt_\cA}^{1/2} \right) + \bar{O}(\frac{\log(m_3)}{m_3^2}).
\end{align*}
We have $\gamma = (b + 2\Delta)^2\sqrt{K\log(2K/\delta_2)/m_2}$,
and $\Delta = \sqrt{\frac{Cd^2 b^2\log(1/\delta_1)}{m_1}}$.
For $m_1 \geq Cd\log(1/\delta_1)$, $\Delta \leq b\sqrt{d}$, thus $\gamma \leq b^2(2\sqrt{d}+1)^2\sqrt{K\log(2K/\delta_2)/m_2}$.
Substituting for $\Delta$ and $\gamma$, we have
\begin{align*}
&L(\hat{\vw}) - L_\opt \leq \frac{C\rho^\opt_\cA\log(1/\delta_3)}{m_3} + \frac{C\log(1/\delta_3)}{m_3}\left(\frac{16Cd^2 b^2\log(1/\delta_1)}{m_1}\right)^{1/4}\Lambda_D^{1/4}{\rho^\opt_\cA}^{3/4} \\
&\quad +\frac{C\log(1/\delta_3)}{m_3}\Bigg(\left(\frac{4Cd^2 b^2\log(1/\delta_1)}{m_1}\right)^{1/2}\\
&\qquad\qquad\qquad\qquad
+\sqrt{2}b(2\sqrt{d}+1)\left(\frac{K\log(2K/\delta_2)}{m_2}\right)^{1/4}\Bigg)\cdot\Lambda_D^{1/2}{\rho^\opt_\cA}^{1/2} + \bar{O}(\frac{\log(m_3)}{m^2_3}).
\end{align*}
To get the theorem, set $m_3 = m - m^{4/5}$, $m_2 = m_1 = m^{4/5}/2$,  $\delta_1=\delta_2=\delta/4$, and $\delta_3 = \delta/2$.  
\end{proof}

\section{Improvement over Passive Learning}\label{sec:lowerbound}
\thmref{main} shows that our active learner approaches the oracle rate, which can be strictly faster than the rate implied by \thmref{alg} for passive learning. 
To complete the picture, observe that this better rate cannot be achieved by \emph{any} passive learner. This can be seen by the following $1$-dimensional example. 
Let $\sigma > 0 ,\alpha > \frac{1}{\sqrt{2}}, p = \frac{1}{2\alpha^2}$, and $\eta \in \reals$ such that $|\eta| \leq \frac{\sigma}{\alpha}$. Let $D_\eta$ over $\reals\times \reals$ such that with probability $p$, $X = \alpha$ and $Y = \alpha \eta + \epsilon$, where $\epsilon \sim N(0,\sigma^2)$, and with probability $1-p$, $X = \beta := \sqrt{\frac{1-p\alpha^2}{1-p}}$ and $Y = 0$. 
Then $\E[X^2] = 1$ and $w_\opt = p\alpha^2 \eta$. 
Consider a partition of $\reals$ such that $\alpha \in A_1$ and $\beta \in A_2$. 
Then $p_1=p$, $\mu_1 =  \E_\epsilon[\alpha^2(\epsilon + \alpha \eta - \alpha w_\opt)^2] =  \alpha^2(\sigma^2 + \alpha^2\eta^2(1-p\alpha^2)) \leq \frac{3}{2}\alpha^2\sigma^2$.
In addition, $p_2 = 1-p$ and $\mu_2 = \beta^4w_\opt^2 = (\frac{1-p\alpha^2}{1-p})^2p^2\alpha^4\eta^2 \leq\frac{p^2\alpha^2\sigma^2}{4(1-p)^2}.$
The oracle risk is
\[
\rho_\cA^\opt = (p_1\sqrt{\mu_1} + p_2\sqrt{\mu_2})^2 \leq (p\sqrt{\frac{3}{2}}\alpha\sigma + (1-p)\frac{p\alpha\sigma}{2(1-p)})^2 = p^2\alpha^2\sigma^2(\sqrt{\frac{3}{2}}+\frac{1}{2})^2 \leq 2p\sigma^2.
\]
Therefore, for the active learner, with probability $1-\delta$,
\begin{equation}\label{eq:activerate}
L(\hat{w}) - L_\opt \leq \frac{2Cp\sigma^2\log(1/\delta)}{m} + o(\frac{1}{m}).
\end{equation}
In contrast, consider any passive learner that receives $m$ labeled examples and outputs a predictor $\hat{w}$. 
Consider the estimator for $\eta$ defined by $\hat{\eta} = \frac{\hat{w}}{p\alpha^2}$.
$\hat{\eta}$ estimates the mean of a Gaussian distribution with variance $\sigma^2/\alpha^2$.
The minimax optimal rate for such an estimator is $\frac{\sigma^2}{\alpha^2n}$, where $n$ is the number of examples with $X = \alpha$.\footnote{Since $|\eta| \leq \frac{\sigma}{\alpha}$, this rate holds when $\frac{\sigma^2}{n} \ll \frac{\sigma^2}{\alpha^2}$, that is $n \gg \alpha^2$. \citep{CasellaSt81}} With probability at least $1/2$, $n \leq 2mp$. Therefore, $\E_{D^m}[(\hat{\eta} - \eta)^2] \geq \frac{\sigma^2}{4\alpha^2mp}$.
It follows that $\E_{D^m}[L(\hat{w}) - L_\opt] = \E_{D^m}[(\hat{w} - w)^2] = p^2\alpha^4\cdot E[(\hat{\eta} - \eta)^2] \geq \frac{p\alpha^2\sigma^2}{4m} = \frac{\sigma^2}{4m}. 
$
Comparing this to \eqref{activerate}, one can see that the ratio between the rate of the best passive learner and the rate of the active learner approaches $O(1/p)$ for large $m$.

\section{Discussion}
Many questions remain open for active regression. For instance, it is of particular interest whether the convergence rates provided here are the best possible for this model.
Second, we consider here only the plain vanilla finite-dimensional regression, however we believe that the approach can be extended to ridge regression in a general Hilbert space. Lastly, the algorithm uses static allocation of samples to stages and to partitions. In Monte-Carlo estimation \cite{CarpentierMu12}, dynamic allocation has been used to provide convergence to a pseudo-risk with better constants. It is an open question whether this type of approach can be useful in the case of active regression.

\bibliographystyle{abbrvnat}
\bibliography{bib}

\begin{thebibliography}{22}
\providecommand{\natexlab}[1]{#1}
\providecommand{\url}[1]{\texttt{#1}}
\expandafter\ifx\csname urlstyle\endcsname\relax
  \providecommand{\doi}[1]{doi: #1}\else
  \providecommand{\doi}{doi: \begingroup \urlstyle{rm}\Url}\fi

\bibitem[Balcan et~al.(2009)Balcan, Beygelzimer, and Langford]{BalcanBeLa09}
M.~F. Balcan, A.~Beygelzimer, and J.~Langford.
\newblock Agnostic active learning.
\newblock \emph{Journal of Computer and System Sciences}, 75\penalty0
  (1):\penalty0 78--89, 2009.

\bibitem[Burbidge et~al.(2007)Burbidge, Rowland, and King]{BurbidgeRoKi07}
R.~Burbidge, J.~J. Rowland, and R.~D. King.
\newblock Active learning for regression based on query by committee.
\newblock In \emph{Intelligent Data Engineering and Automated Learning-IDEAL
  2007}, pages 209--218. Springer, 2007.

\bibitem[Cai et~al.(2013)Cai, Zhang, and Zhou]{CaiZhZh13}
W.~Cai, Y.~Zhang, and J.~Zhou.
\newblock Maximizing expected model change for active learning in regression.
\newblock In \emph{Data Mining (ICDM), 2013 IEEE 13th International Conference
  on}, pages 51--60. IEEE, 2013.

\bibitem[Carpentier and Munos(2012)]{CarpentierMu12}
A.~Carpentier and R.~Munos.
\newblock Minimax number of strata for online stratified sampling given noisy
  samples.
\newblock In N.~H. Bshouty, G.~Stoltz, N.~Vayatis, and T.~Zeugmann, editors,
  \emph{Algorithmic Learning Theory}, volume 7568 of \emph{Lecture Notes in
  Computer Science}, pages 229--244. Springer Berlin Heidelberg, 2012.

\bibitem[Casella and Strawderman(1981)]{CasellaSt81}
G.~Casella and W.~E. Strawderman.
\newblock Estimating a bounded normal mean.
\newblock \emph{The Annals of Statistics}, 9\penalty0 (4):\penalty0 870--878,
  1981.

\bibitem[Cohn et~al.(1994)Cohn, Atlas, and Ladner]{CohnAtLa94}
D.~Cohn, L.~Atlas, and R.~Ladner.
\newblock Improving generalization with active learning.
\newblock \emph{Machine Learning}, 15:\penalty0 201--221, 1994.

\bibitem[Cohn et~al.(1996)Cohn, Ghahramani, and Jordan]{CohnGhJo96}
D.~A. Cohn, Z.~Ghahramani, and M.~I. Jordan.
\newblock Active learning with statistical models.
\newblock \emph{Journal of Artificial Intelligence Research}, 4:\penalty0
  129--145, 1996.

\bibitem[Dasgupta et~al.(2008)Dasgupta, Hsu, and Monteleoni]{DasguptaHsMo08}
S.~Dasgupta, D.~Hsu, and C.~Monteleoni.
\newblock A general agnostic active learning algorithm.
\newblock In J.~Platt, D.~Koller, Y.~Singer, and S.~Roweis, editors,
  \emph{Advances in Neural Information Processing Systems 20}, pages 353--360.
  MIT Press, 2008.

\bibitem[Efromovich(2007)]{Efromovich07}
S.~Efromovich.
\newblock Sequential design and estimation in heteroscedastic nonparametric
  regression.
\newblock \emph{Sequential Analysis}, 26\penalty0 (1):\penalty0 3--25, 2007.

\bibitem[Ganti and Gray(2012)]{Ganti12}
R.~Ganti and A.~G. Gray.
\newblock Upal: Unbiased pool based active learning.
\newblock In \emph{International Conference on Artificial Intelligence and
  Statistics}, pages 422--431, 2012.

\bibitem[Glasserman(2004)]{Glasserman04}
P.~Glasserman.
\newblock \emph{Monte Carlo methods in financial engineering}, volume~53.
\newblock Springer, 2004.

\bibitem[Gy{\"o}rfi et~al.(2002)Gy{\"o}rfi, Kohler, Krzyzak, and
  Walk]{Gyorfi02}
L.~Gy{\"o}rfi, M.~Kohler, A.~Krzyzak, and H.~Walk.
\newblock \emph{A distribution-free theory of nonparametric regression}.
\newblock Springer, 2002.

\bibitem[Hsu and Sabato(2014)]{HsuSabato14}
D.~Hsu and S.~Sabato.
\newblock Heavy-tailed regression with a generalized median-of-means.
\newblock In \emph{Proceedings of the 31st International Conference on Machine
  Learning}, volume~32, pages 37--45. JMLR Workshop and Conference Proceedings,
  2014.

\bibitem[Hsu et~al.(2012)Hsu, Kakade, and Zhang]{HsuKaZh12}
D.~Hsu, S.~M. Kakade, and T.~Zhang.
\newblock Random design analysis of ridge regression.
\newblock In \emph{Twenty-Fifth Conference on Learning Theory}, 2012.

\bibitem[Kanamori(2002)]{Kanamori02}
T.~Kanamori.
\newblock Statistical asymptotic theory of active learning.
\newblock \emph{Annals of the Institute of Statistical Mathematics},
  54\penalty0 (3):\penalty0 459--475, 2002.

\bibitem[Kanamori and Shimodaira(2003)]{KanamoriSh03}
T.~Kanamori and H.~Shimodaira.
\newblock Active learning algorithm using the maximum weighted log-likelihood
  estimator.
\newblock \emph{Journal of Statistical Planning and Inference}, 116\penalty0
  (1):\penalty0 149--162, 2003.

\bibitem[Needell et~al.(2013)Needell, Srebro, and Ward]{NeedellSrWa13}
D.~Needell, N.~Srebro, and R.~Ward.
\newblock Stochastic gradient descent and the randomized kaczmarz algorithm.
\newblock \emph{arXiv preprint arXiv:1310.5715}, 2013.

\bibitem[Sugiyama(2006)]{Sugiyama06}
M.~Sugiyama.
\newblock Active learning in approximately linear regression based on
  conditional expectation of generalization error.
\newblock \emph{The Journal of Machine Learning Research}, 7:\penalty0
  141--166, 2006.

\bibitem[Sugiyama and Nakajima(2009)]{Sugiyama09}
M.~Sugiyama and S.~Nakajima.
\newblock Pool-based active learning in approximate linear regression.
\newblock \emph{Machine Learning}, 75\penalty0 (3):\penalty0 249--274, 2009.

\bibitem[Von~Neumann(1951)]{vonNeumann51}
J.~Von~Neumann.
\newblock Various techniques used in connection with random digits.
\newblock \emph{Applied Math Series}, 12\penalty0 (36-38):\penalty0 1, 1951.

\bibitem[Wiens(1998)]{Wiens98}
D.~P. Wiens.
\newblock Minimax robust designs and weights for approximately specified
  regression models with heteroscedastic errors.
\newblock \emph{Journal of the American Statistical Association}, 93\penalty0
  (444):\penalty0 1440--1450, 1998.

\bibitem[Wiens(2000)]{Wiens00}
D.~P. Wiens.
\newblock Robust weights and designs for biased regression models: Least
  squares and generalized m-estimation.
\newblock \emph{Journal of Statistical Planning and Inference}, 83\penalty0
  (2):\penalty0 395--412, 2000.

\end{thebibliography}

\appendix

\section{On the Derivation of \thmref{alg}}\label{ap:derivation}

\thmref{alg} is a useful variation of the results in \cite{HsuSabato14}. 
It stems from a slight change to Theorem 1 in \cite{HsuSabato14}, such that instead of requiring their `Condition 1', which leads to the requirement: $n >= d \log(1/\delta)$, we require a bounded condition number $R$, which leads to the requirement:$n >= c R^2 \log(c'R) \log(1/\delta))$, similarly to the proof of Theorem 2 there. We use the slightly stronger condition $n >= cR^2 \log(c'n) \log(c''/\delta))$, with $n$ on both sides (and different constants $c,c',c''$), since it is more convenient in the derivations that follow. Note that both conditions are equivalent up to constants.

\section{Sampling according to \texorpdfstring{$P_\phi$}{P of phi}}\label{ap:sampling}
Sampling $m$ labeled examples according to $P_\phi$ can be done by actively querying $m$, labels via standard rejection sampling. The algorithm is brought here for completeness.
\begin{algorithm}
\begin{algorithmic}[1]
\REQUIRE Sample size $m$, $\phi:\supx(D) \rightarrow \pos$ such that $\E[\phi(\vx)] = 1$.
\ENSURE A labeled sample $S$ of size $m$ drawn according to $P_\phi$.
\WHILE{$|S| < m$}
\STATE Draw $\vx$ according to $D_X$
\STATE Draw a uniform random variable $u \sim U[0,1]$
\IF{$u \leq \phi(\vx)/\max_{\vz \in \supx(D)} \phi(\vz)$}
\STATE Draw $y$ according to $D_{Y\mid \vx}$
\STATE $S \leftarrow S \cup \{ (\vx/\sqrt{\phi(\vx)}, y/\sqrt{\phi(\vx)})\}$.
\ENDIF
\ENDWHILE
\end{algorithmic}
\caption{Sampling according to $P_\phi$}
\label{alg:sample}
\end{algorithm}

\section{Proof of \lemref{minsol}}\label{ap:lemmaminproof}

\begin{proof}[Proof of \lemref{minsol}]
Denote $\xi := \frac{c\log(c'n)\log(1/\delta)}{n}$. Let $\beta \geq 0$, and $H_\beta = \{\vx \mid \psi(\vx) \leq \beta \norm{\vx}^2_*\}$. There exists a $\beta \geq 0$ such that the solution for \eqref{minimization} has the following form.
\[
\phi^\opt(\vx) = \max\{ \norm{\vx}^2_*\xi,
\frac{\psi(\vx)(1- \E[\norm{\vX}^2_* \xi \cdot \one[ \vX \in H_\beta]])}{\E[\psi(\vX)\cdot\one[\vX \notin H_\beta]]}\}.
\]
Therefore $\phi^\opt(\vx) \geq \psi(\vx)(1-\E[\norm{\vX}_*^2]\cdot \xi)/\E[\psi(\vX)].$
Plugging this into the definition of $\rho$, and using \eqref{expnorm},
\[
\rho(\phi^\opt) = \E[\psi^2(\vx)/\phi^\opt(\vx)] \leq \frac{\E^2[\psi(\vx)]}{1-d\xi} \leq 
\E^2[\psi(\vx)] + \frac{d\xi}{1-d\xi}\cdot \E^2[\psi(\vx)].
\]
For $n \geq O(d\log(d)\log(1/\delta))$, $d\xi \leq 1/2$, hence $\frac{d\xi}{1-d\xi} \leq 2d\xi \leq O(d\log(n)\log(1/\delta)/n).$ Therefore $\rho(\phi^\opt) \leq \E^2[\psi(\vx)](1 + O(d\log(n)\log(1/\delta)/n)).$
To see that $\rho(\phi^\opt) \geq \E^2[\psi(\vx)]$, consider \eqref{minimization} for $\xi = 0$. In this case the optimal solution is $\phi^\opt(\vx) = \psi(\vx)/\E[\psi(\vx)]$.
\end{proof}

\section{Proof of \lemref{nubound}}\label{ap:lemmanuproof}

\begin{proof}[Proof of \lemref{nubound}]
By the definition of $\mu_i$ and $Q_i$,
\begin{align}
\mu_i &= \int_{A_i \times \reals} \snorm^2 (\vX^\t \vw_\opt - Y)^2 \, dD(\vX,Y) \notag\\
&= \int_{A_i \times \reals} (\frac{\vX^\t}{\snorm} \vw_\opt - \frac{Y}{\snorm})^2 \snorm^4 \cdot dD(\vX,Y)\notag\\
&= \Theta_i \cdot \int (\vX^\t\vw_\opt - Y)^2\cdot dQ_i(\vX,Y)\notag\\
&= \Theta_i \cdot \E_{Q_i}[(\vX^\t\vw_\opt - Y)^2].\label{eq:mui}
\end{align}
Assume that $\cE$ holds. By \eqref{delta}, for all $\vX \in \supx(Q_i)$,
\[
(\vX^\t \vw_\opt - Y)^2 \leq (|\vX^\t \vw_\opt - \vX^\t \hat{\vv}| + |\vX^\t \hat{\vv} - Y|)^2 \leq (|\vX^\t \hat{\vv} - Y|+\Delta)^2.
\]
From \eqref{mui} and the definition of $\nu_i$, it follows that $\mu_i \leq \nu_i$. For the upper bound on $\nu_i$, 
\begin{align}\label{eq:pointbound}
(|\vX^\t \hat{\vv} - Y| + \Delta)^2 &\leq (|\vX^\t \vw_\opt - Y| + |\vX^\t \vw_\opt - \vX^\t \hat{\vv}| + \Delta)^2 \notag\\
&\leq (|\vX^\t \vw_\opt - Y| + 2\Delta)^2
\end{align}
By Jensen's inequality, $\E_{Q_i}[(|\vX^\t \vw_\opt - Y| + 2\Delta)^2] \leq (\sqrt{\E_{Q_i}[(\vX^\t \vw_\opt - Y)^2]} + 2\Delta)^2.$
Therefore
\begin{align*}
\nu_i &\equiv \Theta_i\cdot \E_{Q_i}[(|\vX^\t \hat{\vw} - Y| + \Delta)^2] \\
&\leq 
\Theta_i(\sqrt{\E_{Q_i}[(\vX^\t \vw_\opt - Y)^2]} + 2\Delta)^2 \\
&= (\sqrt{\mu_i} + 2\Delta\sqrt{\Theta_i})^2.
\end{align*}
\end{proof}
\end{document}